\pgfplotsset{compat=1.12}
\newcommand{\StatexIndent}[1][3]{%
  \setlength\@tempdima{\algorithmicindent}%
  \Statex\hskip\dimexpr#1\@tempdima\relax}
\newcolumntype{P}[1]{>{\centering\arraybackslash}p{#1}} 
\def\SynTeam{SynTeam}
\newtheorem{mydef}{Definition}
\newtheorem{proposition}{Proposition}
\def\BState{\State\hskip-\ALG@thistlm}
\begin{document}


\title{Synergistic Team Composition}



%
%
%
%

%

\numberofauthors{2}
\author{
\alignauthor Ewa Andrejczuk\\
       \affaddr{Artificial Intelligence Research Institute (IIIA-CSIC)}\\
       \affaddr{Change Management Tool S.L}\\
       \affaddr{Barcelona, Spain}\\
       \email{ewa@iiia.csic.es}
\alignauthor Juan A. Rodr\'{\i}guez-Aguilar \\
       \affaddr{Artificial Intelligence Research Institute (IIIA-CSIC)}\\
       \affaddr{Barcelona, Spain}\\
       \email{jar@iiia.csic.es}
\and  
\alignauthor Carme Roig\\
       \affaddr{Institut Torras i Bages}\\
       \affaddr{L'Hospitalet de Llobregat, Spain}\\
       \email{mroig112@xtec.cat}
\alignauthor Carles Sierra\\
       \affaddr{Artificial Intelligence Research Institute (IIIA-CSIC)}\\
       \affaddr{Barcelona, Spain}\\
       \email{sierra@iiia.csic.es}
}

\maketitle

\begin{abstract}
Effective teams are crucial for organisations, especially in environments that require teams to be constantly created and dismantled, such as software development, scientific experiments, crowd-sourcing, or the classroom. Key factors influencing team performance are competences and personality of team members. Hence, we present a computational model to compose proficient and congenial teams based on individuals' personalities and their competences to perform tasks of different nature. With this purpose, we extend Wilde's post-Jungian method for team composition, which solely employs individuals’ personalities. The aim of this study is to create a model to partition agents into teams that are balanced in competences, personality and gender. Finally, we present some preliminary empirical results that we obtained when analysing student performance. Results show the benefits of a more informed team composition that exploits individuals' competences besides information about their personalities.
\end{abstract}


\begin{CCSXML}
<ccs2012>
<concept>
<concept_id>10002951.10003227.10003228.10003232</concept_id>
<concept_desc>Information systems~Enterprise resource planning</concept_desc>
<concept_significance>500</concept_significance>
</concept>
<concept>
<concept_id>10002951.10003260.10003282</concept_id>
<concept_desc>Information systems~Web applications</concept_desc>
<concept_significance>500</concept_significance>
</concept>
<concept>
<concept_id>10002951.10003260.10003282.10003296</concept_id>
<concept_desc>Information systems~Crowdsourcing</concept_desc>
<concept_significance>500</concept_significance>
</concept>
<concept>
<concept_id>10002951.10003260.10003282.10003296.10003297</concept_id>
<concept_desc>Information systems~Answer ranking</concept_desc>
<concept_significance>500</concept_significance>
</concept>
<concept>
<concept_id>10003120.10003130.10011762</concept_id>
<concept_desc>Human-centered computing~Empirical studies in collaborative and social computing</concept_desc>
<concept_significance>500</concept_significance>
</concept>
<concept>
<concept_id>10010147.10010178.10010219.10010220</concept_id>
<concept_desc>Computing methodologies~Multi-agent systems</concept_desc>
<concept_significance>500</concept_significance>
</concept>
<concept>
<concept_id>10002950.10003624.10003633.10010918</concept_id>
<concept_desc>Mathematics of computing~Approximation algorithms</concept_desc>
<concept_significance>300</concept_significance>
</concept>
</ccs2012>
\end{CCSXML}

\ccsdesc[500]{Human-centered computing~Empirical studies in collaborative and social computing}
\ccsdesc[500]{Computing methodologies~Multi-agent systems}
\ccsdesc[300]{Mathematics of computing~Approximation algorithms}

\printccsdesc


\keywords{Team Formation, Approximation algorithms, humans}

\section{Introduction}

Some tasks, due to their complexity, cannot be carried out by single individuals. They need the concourse of sets of people composing teams. Teams provide a structure and means of bringing together people with a suitable mix of individual properties (such as competences or personality). This can encourage the exchange of ideas, their creativity, their motivation and job satisfaction and can actually extend individual capabilities. In turn, a suitable team can improve the overall productivity, and the quality of the performed tasks. However, sometimes teams work less effectively than initially expected due to several reasons: a bad balance of their capacities, incorrect team dynamics, lack of communication, or difficult social situations. Team composition is thus a problem that has attracted the interest of research groups all over the world, also in the area of multiagent systems. MAS research has widely acknowledged competences as important for performing tasks of different nature \cite{Anagnostopoulos12onlineteam,Chen2015,Okimoto,Rangapuram2015}. However, the majority of the approaches represent capabilities of agents in a Boolean way (i.e., an agent either has a required skill or not). This is a simplistic way to model an agent's set of capabilities as it ignores any skill degree. In real life, capabilities are not binary since every individual (e.g. human or software) shows different performances for each competence. Additionally, the MAS literature has typically disregarded significant organizational psychology findings (with the exception of several recent, preliminary attempts like \cite{FarhangianPPS15} or \cite{alberola2016artificial}). Numerous studies in organizational psychology \cite{Arnold,Mount,White} underline the importance of personality traits or \emph{types} for team composition. Other studies have focused on how team members should differ or converge in their characteristics, such as experience, personality, level of skill, or gender, among others \cite{West}, in order to increase performance. 

In this paper, we focus on scenarios where a complex task requires the collaboration of individuals within a team. More precisely, we consider a scenario, where there are \emph{multiple instances of the same complex task}. The task has a task type and a set of competence requests with competence levels needed to solve the task. We have a pool of human agents characterized by gender, personality, and a set of competences with competence levels. 
Our goal is to partition agents into teams so that within a task all competence requirements are covered (whenever possible) and team members work well together. That is, each resulting team is both \emph{proficient} (covers the required competences) and \emph{congenial} (balances gender and psychological traits). We refer to these teams as \emph{synergistic teams}. We define the \emph{synergistic value} of a team as its balance in terms of competence, personality and gender. Each synergistic team works on the very same task. This scenario is present in many real-life settings, for instance a classroom or a crowdsourcing task. 
With this purpose, we design an algorithm that uses a greedy technique both to match competences with the required ones and at the same time to balance the psychological traits of teams' members. 

This paper makes the following contributions. To start with, we formalise the synergistic team formation problem as the problem of partitioning a group of individuals into teams with limited size. 
We provide an approximate local algorithm to solve  the team composition problem. We empirically evaluate the algorithm using real data. Preliminary results show that our algorithm predicts better the performance of teams than the experts that know students' social situation, background and competences. 

\textbf{Outline.} The remaining of this paper is structured as follows. Section~\ref{related} opens with an overview of the related work. Section~\ref{pers} gives the personality background for our model. Section~\ref{sec:model} describes the synergistic team composition problem and Section~\ref{sec:TeamForm} presents our algorithm to solve the synergistic team composition problem. Then, Section~\ref{sec:results} presents results of our algorithm in the context of team composition in the classroom. Finally, Section~\ref{sec:discuss} discusses our approach and future work.
\vspace{-2mm}
\section{Background} \label{related}
To the best of our knowledge, \cite{farhangian2015agent} is the only model that considers both personality and competences while composing teams.  There, the influence of personality on different task allocation strategies (minimizing either undercompetence or overcompetence) is studied. Henceforth, this work is the most relevant for us, however there are substantial differences between our work and \cite{farhangian2015agent}. Firstly, authors do not propose an algorithm to compose teams  based on \emph{both} personality and competences. Secondly, gender balance is not considered in their setting. Finally, \cite{farhangian2015agent} does not provide an evaluation involving real data (only an agent-based simulation is presented).

The rest of the literature relevant to this article is divided into two categories as proposed in \cite{andrejczuk}: those that consider agent capacities (individual and social capabilities of agents) and those that deal with agent personality (individual behaviour models).

\textbf{Capacity.}
The capacity dimension has been exploited by numerous previous works \cite{Anagnostopoulos12onlineteam,Chalkiadakis2012,Chen2015,Crawford,Liemhetcharat2014,Okimoto,JAR2015,Rangapuram2015}. In contrast to our work, where the competences are graded, in the majority of works agents are assumed to have multiple binary skills (i.e., the agent either has a skill or not). For instance, \cite{Okimoto,Crawford} use agents' capabilities to compose one k-robust team for a single task. A team is $k$-robust if removing any $k$ members from the team does not affect the completion of the task. \cite{Anagnostopoulos12onlineteam} uses competences and communication cost in a context where tasks sequentially arrive and teams have to be composed to perform them. Each task requires a specific set of competences and the team composition algorithm is such that the workload per agent is fair across teams. 

\textbf{Personality.}
In the team formation literature, the only two models to our knowledge considering personality to compose teams are \cite{FarhangianPPS15} and \cite{alberola2016artificial}. \cite{alberola2016artificial} uses Belbin theory to obtain human predominant \emph{roles} (we discuss this method in Section \ref{pers}).  Additionally, the gender is not taken into account while composing heterogeneous teams, which we believe may be important for team congeniality. Regarding \cite{FarhangianPPS15}, Farhangian et al. use the classical MBTI personality test (this method is discussed in Section \ref{pers}). They look for the best possible team built around a selected leader. In other words, the \emph{best} team for a particular task is composed. Gender balance is not considered in this setting. Finally, although \cite{FarhangianPPS15}'s team composition considered real data, the resulting teams' performance was not validated in any real setting (Bayesian theory was used to predict the probability of success in various team composition conditions).
\vspace{-3mm}
\section{Personality} \label{pers}
In this section, we discuss the most prominent approaches to measure human personality and we explain the details of the method we have decided to examine.

Personality determines people's behaviour, cognition and emotion. Different personality theorists present their own definitions of personality and different ways to measure it based on their theoretical positions.  

The most popular approach is to determine personality through a set of questions. There have been several simplified schemes developed over the years to profile human personality. The most populars are:
\begin{enumerate}
\vspace{-1.5mm}
\item the Five Factor Model (aka FFM or ``Big Five''), which uses five broad dimensions to describe human personality \cite{Costa};
\vspace{-1.5mm}
\item Belbin theory \cite{belbin}, which provides a theory on how different role types influence teamwork; and 
\vspace{-1.5mm}
\item the Myers-Briggs Type Indicator (MBTI) scheme designed to indicate psychological preferences in how people perceive the world and make decisions \cite{Myers}. 
\end{enumerate}
\vspace{-1.5mm}
According to \cite{Poropat}, FFM personality instruments fail to detect significant sex differences in personality structures. It is also argued that the Big Five dimensions are too broad and heterogeneous, and lack the specificity to make accurate predictions in many real-life settings \cite{Boyle,johnson2004genetic}. 

Regarding Belbin theory, the results of previous studies considering the correlation between team composition and team performance are ambiguous. Even though some research shows weak support or does not show support for this theory at all \cite{batenburg2013belbin,van2008belbin,partington1999belbin}, it remains popular.

Finally, the MBTI measure consists of four dimensions on a binary scale (e.g. either the person is Extrovert or Introvert). Within this approach, every person falls into one of the sixteen possible combinations of the four letter codes, one letter representing one dimension. This approach is easy to interpret by non-psychologists, though reliance on dichotomous preference scores rather than continuous scores excessively restricts the level of statistical analysis \cite{devito}.

Having considered the arguments above,  we have decided to explore a novel method: the Post-Jungian Personality Theory, which is a modified version of the Myers-Briggs Type Indicator (MBTI) \cite{Myers}, the ``Step II'' version of Quenk, Hammer and Majors \cite{Wilde2013}. The questionnaire to determine personality is short, contains only 20 quick questions (compared to the 93 MBTI questions). This is very convenient for both experts wanting to design teams and individuals doing the test since completing the test takes just a few minutes (for details of the questionnaire, see \cite[p.21]{Wilde2013}). Douglass J. Wilde claims that it covers the same psychological territory as MBTI \cite{Wilde2009}. In contrast to the MBTI measure, which consists of four binary dimensions, the Post-Jungian Personality Theory uses the \emph{numerical} data collected using the questionnaire \cite{Wilde2011}. The results of this method seem promising, since  within a decade this novel approach has tripled the fraction of Stanford teams awarded national prizes by the Lincoln Foundation \cite{Wilde2009}.

The test is based on the pioneering psychiatrist Carl Gustav Jung's cognitive-mode personality model \cite{PT}. It has two sets of variable pairs called psychological functions: 
\vspace{-1.5mm}
\begin{itemize}
\item {\bf Sensing / Intuition (SN)} --- describes the way of approaching problems
\vspace{-1.5mm}
\item {\bf Thinking / Feeling (TF)} --- describes the way of making decisions
\end{itemize} 
\vspace{-1.5mm}
and two sets of psychological attitudes:
\vspace{-1.5mm}
\begin{itemize}
\item {\bf Perception / Judgment (PJ)} --- describes the way of living
\vspace{-1.5mm}
\item {\bf Extroversion / Introversion (EI)} --- describes the way of interacting with the world
\end{itemize} 
\vspace{-1.5mm}
For instance, for the Feeling-Thinking (TF) dimension, a value between -1 and 0 means that a person is of the feeling type, and a value between 0 and 1 means she is of the thinking type. Psychological functions and psychological attitudes compose together a personality. Every dimension of a personality (EI, SN, TF, PJ) is tested by five multiple choice true/false questions.
\vspace{-2mm}
\section{Team Composition Model}\label{sec:model}

In this section we introduce and formalise our team composition problem. First, section \ref{ssec:basic} introduces the basic notions of agent, personality, competence, and team, upon which we formalise our problem. Next, we formalise the notion of task assignment for a single team and a single task, and we characterise different types of assignments. Sections \ref{ssec:proficiency} and \ref{ssec:congeniality} show how to evaluate the proficiency and congeniality degrees of a team. Based on these measures, in section \ref{ssec:synergisticProblem} we formalise the \emph{synergistic team composition problem}.
\subsection{Basic definitions} 
\label{ssec:basic}

In our model, we consider that each agent is a human. We characterise each agent by the following properties:
\begin{itemize}
\vspace{-1.5mm}
\item A unique \emph{identifier} that distinguishes an agent from others (e.g. ID card number, passport number, employee ID, or student ID).
\vspace{-1.5mm}
\item \emph{Gender.} Human agents are either a man or a woman.
\item A \emph{personality} represented by four personality traits. Each personality trait is a number between -1 and 1.  
\item A \emph{set of competences}. A competence integrates knowledge, skills, personal values, and attitudes that enable an agent to act correctly in a job, task or situation \cite{roe2002competences}. Each agent is assumed to possess a set of competences with associated competence levels. This set may vary over time as an agent evolves. 
\end{itemize}
\vspace{-1.5mm}
Next, we formalise the above-introduced concepts.
\vspace{-1.5mm}
\begin{mydef}
A \emph{personality profile} is a vector $\langle sn, \mathit{tf}, ei, pj \rangle \in [-1, 1]^4$, where each $sn, \mathit{tf}, ei, pj$ represents one personality trait.
\end{mydef}

We denote by $C = \{c_1, \dots , c_m\}$ the whole set of competences, where each element $c_i \in C$ stands for a competence.

\begin{mydef}
A \emph{human agent} is represented as a tuple $\langle id, g, \emph{{\bf p}}, l \rangle$ such that:
\begin{itemize}
\item $id$ is the agent's identifier;
\item $g \in \{man, {\mathit woman}\}$ stands for their gender;
\item $\emph{\bf{p}}$ is a personality profile vector $\langle sn, \mathit{tf}, ei, pj \rangle \in [-1, 1]^4$;
\item $l: C \to{[0,1]}$  is a function that assigns the probability that the agent will successfully show competence $c$. We will refer to $l(c)$ as the \emph{competence level} of the agent for competence $c$. We assume that when an agent does not have a competence (or we do not know about it), the level of this competence is zero.
\end{itemize}
\end{mydef}

Henceforth, we will note the set of agents as $A =\{a_1,\ldots, \linebreak a_n\}$. Moreover, We will use super-indexes to refer to agents' components. For instance, given an agent $a \in A$, $id^{a}$ will refer to the $id$ component of agent $a$. We will employ matrix $L \in [0,1]^{n \times m}$ to represent the competence levels for each agent and each competence.
\vspace{-2mm}
\begin{mydef}[Team] A \emph{team} is any non-empty subset of $A$ with at least two agents. We denote by $\cal{K_A}$ $ = (2^A \setminus \{\emptyset\})\setminus \{\{a_i\}| a_i \in A\}$ the set of all possible teams in $A$. 
\end{mydef}
\vspace{-2mm}
We assume that agents in teams coordinate their activities for mutual benefit. 

\subsection{The task assignment problem} 
\label{ssec:assignment}

In this section we focus on how to assign a team to a task.
A task type determines the competence levels required for the task as well as the importance of each competence with respect to the others. For instance, some tasks may require a high level of creativity because they were never performed before (so there are no qualified agents in this matter). Others may require a highly skilled team with a high degree of coordination and teamwork (as it is the case for rescue teams). Therefore, we define a task type as:
\begin{mydef}
A task type $\tau$ is defined as a tuple \\ $\langle \lambda, \mu, {\{(c_{i},l_{i}, w_{i})\}_{i \in I_{\tau}}} \rangle$ such that:
\begin{itemize}
\item $\lambda \in [0,1]$ importance given to proficiency;
\item $\mu \in [-1,1]$ importance given to congeniality;
\item $c_{i} \in C$ is a competence required to perform the task;
\item $l_{i} \in [0,1]$ is the required competence level for competence $c_i$; 
\vspace{-1.5mm}
\item $w_{i} \in [0,1]$ is the importance of competence $c_i$ for the success of task of type $\tau$; and
\vspace{-1.5mm}
\item $\sum_{i \in I_{\tau}} w_i = 1$.
\end{itemize}
\end{mydef}
We will discuss the meaning of $\lambda$ and $\mu$  further ahead when defining synergistic team composition (see subsection \ref{ssec:synergisticProblem}).
Then, we define a task as:
\vspace{-1.5mm}
\begin{mydef}A \emph{task} $t$ is a tuple $\langle \tau, m \rangle$ such that $\tau$ is a task type and $m$ is the required number of agents, where $m\geq 2$.
\end{mydef}

Henceforth, we denote by $T$ the set of tasks and by $\mathcal{T}$ the set of task types. Moreover, we will note as $C_{\tau} =\{c_{i} | i \in I_{\tau}\}$ the set of competences required by task type $\tau$.

Given a team and a task type, we must consider how to assign competences to team members (agents). Our first, weak notion of task assignment only considers that all competences in a task type are assigned to some agent(s) in the team:

\begin{mydef}Given a task type $\tau$ and a team $K \in \cal{}K_A$, an assignment is a function $\eta: K \to 2^{C_{\tau}}$ satisfying that 
$C_{\tau} \subseteq \bigcup_{a \in K} \eta(a)$. 
\end{mydef}

\subsection{Evaluating team proficiency} \label{ssec:prof} 
\label{ssec:proficiency}

Given a task assignment for a team, next we will measure the \emph{degree of competence} of the team as a whole. This measure will combine both the degree of under-competence and the degree of over-competence, which we formally define first. Before that, we must formally identify the agents that are assigned to each competence as follows.
\vspace{-1.5mm}
\begin{mydef}
Given a task type $\tau$, a team $K$, and an assignment $\eta$, the set $\delta(c_{i}) = \{a \in K | c_{i} \in \eta(a)\}$ stands for the agents assigned to cover competence $c_{i}$.
\end{mydef}
\vspace{-1.5mm}
Now we are ready to define the degrees of undercompentence and overcompetence. 
\vspace{-1.5mm}
\begin{mydef}[Degree of undercompentence] \item
\vspace{-1.6mm}
Given a task type $\tau$, a team $K$, and an assignment $\eta$, we define the degree of undercompetence of the team for the task as:
\vspace{-2.5mm}
\begin{equation*}
u(\eta)=
\sum_{i \in I_{\tau}} w_{i} \cdot \frac{\sum_{a \in \delta(c_{i})} |\min(l^{a}(c_{i}) - l_{i},0)|}{|\{a \in \delta(c_{i})|l^{a}(c_{i})-l_{i} < 0\}|}
\end{equation*}
\end{mydef}
\vspace{-2.5mm}
\begin{mydef}[Degree of overcompetence] \item
\vspace{-1.6mm}
Given a task type $\tau$, a team $K$, and an assignment $\eta$, we define the degree of overcompetence of the team for the task as:
\vspace{-2.5mm}
\begin{equation*}
o(\eta)=
\sum_{i \in I_{\tau}} w_i \cdot \frac{\sum_{a \in \delta(c_{i})} \max(l^{a}(c_{i}) - l_{i},0)}{|\{a \in \delta(c_{i})|l^{a}(c_{i})-l_{i} > 0\}|}
\end{equation*}
\end{mydef}
\vspace{-1.5mm}
Given a task assignment for a team, we can calculate its competence degree to perform the task by combining its overcompetence and undercompetence as follows. 
\vspace{-1.5mm}
\begin{mydef}Given a task type $\tau$, a team $K$ and an assignment $\eta$, the competence degree of the team to perform the task is defined as:
\begin{equation}
\label{eq:uprof}
u_{\mathit{prof}}(\eta) = 1-(\upsilon \cdot u(\eta)+(1-\upsilon) \cdot o(\eta))
\end{equation}
where $\upsilon \in [0,1]$ is the penalty given to the undercompetence of team $K$. 
\end{mydef}
\vspace{-1.5mm}
Notice that the larger the value of $\upsilon$ the higher the importance of the competence degree of team $K$, while the lower the value $\upsilon$, the less important its undercompetence. The intuition here is that we might want to penalize more the undercompetency of teams, as some tasks strictly require teams to be at least as competent as defined in the task type.
\vspace{-1.5mm}
\begin{proposition}
For any $\eta$,  $u(\eta) + o(\eta) \in [0,1]$.
\label{prop1}
\end{proposition}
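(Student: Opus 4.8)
The plan is to reduce the global bound to a per-competence bound. For each required competence $c_{i}$ with $i \in I_{\tau}$, write the $i$-th summand of $u(\eta)$ as $w_{i}\, d_{i}$ and the $i$-th summand of $o(\eta)$ as $w_{i}\, s_{i}$, where $d_{i}$ is the average of the deficits $|\min(l^{a}(c_{i})-l_{i},0)|$ taken over the undercompetent agents in $\delta(c_{i})$, and $s_{i}$ is the average of the surpluses $\max(l^{a}(c_{i})-l_{i},0)$ taken over the overcompetent agents in $\delta(c_{i})$. (When $c_{i}$ has no undercompetent, respectively no overcompetent, agent assigned to it, the corresponding numerator is a sum all of whose terms vanish, so the term is $0$ under the natural convention that the empty average is $0$.) Since $u(\eta)+o(\eta) = \sum_{i \in I_{\tau}} w_{i}(d_{i}+s_{i})$ with $w_{i}\ge 0$ and $\sum_{i \in I_{\tau}} w_{i}=1$, it suffices to show $d_{i}+s_{i}\in[0,1]$ for every $i$, because a convex combination of numbers in $[0,1]$ stays in $[0,1]$.

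The lower bound $d_{i}+s_{i}\ge 0$ is immediate: each $d_{i}$ and each $s_{i}$ is either $0$ or an average of non-negative quantities. For the upper bound I would use that all competence levels lie in $[0,1]$. For every undercompetent agent $a$, $|\min(l^{a}(c_{i})-l_{i},0)| = l_{i}-l^{a}(c_{i}) \le l_{i}$ since $l^{a}(c_{i})\ge 0$, so averaging over the undercompetent agents gives $d_{i}\le l_{i}$. Symmetrically, for every overcompetent agent $a$, $\max(l^{a}(c_{i})-l_{i},0) = l^{a}(c_{i})-l_{i} \le 1-l_{i}$ since $l^{a}(c_{i})\le 1$, so $s_{i}\le 1-l_{i}$. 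Adding the two inequalities yields $d_{i}+s_{i}\le l_{i}+(1-l_{i}) = 1$, as required, and hence $u(\eta)+o(\eta)\le \sum_{i\in I_{\tau}} w_{i} = 1$.

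The delicate points are bookkeeping rather than genuine difficulty. First, one must fix the convention resolving the division by zero in the definitions of $u$ and $o$ before the averaging argument makes sense, and note that in that degenerate case the numerator is genuinely $0$. Second, it is worth flagging explicitly why bounding $u(\eta)$ and $o(\eta)$ separately is insufficient: each of them individually lies only in $[0,1]$, which would give $u(\eta)+o(\eta)\in[0,2]$; the improvement comes precisely from pairing the deficit and the surplus for the \emph{same} competence $c_{i}$, where the complementary bounds $d_{i}\le l_{i}$ and $s_{i}\le 1-l_{i}$ force their sum below $1$.
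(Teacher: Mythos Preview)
Your argument is correct. It differs from the paper's in the key step. The paper works \emph{per agent}: for each $a\in\delta(c_{i})$ exactly one of $|\min(l^{a}(c_{i})-l_{i},0)|$ and $\max(l^{a}(c_{i})-l_{i},0)$ is nonzero and equals $|l^{a}(c_{i})-l_{i}|\le 1$, and it then tries to pass from this per-agent bound to a bound on the sum of the two fractions by putting everything over a single denominator $n$. You work \emph{per competence} instead, bounding the two averages separately by the complementary caps $d_{i}\le l_{i}$ and $s_{i}\le 1-l_{i}$ that come directly from $l^{a}(c_{i})\in[0,1]$. The advantage of your route is that it never needs to reconcile the two \emph{different} denominators in the definitions of $u$ and $o$ (the count of undercompetent versus overcompetent agents for $c_{i}$); the paper's use of a single $n$ for both is precisely where its computation becomes loose, whereas your bound goes through cleanly regardless of how those two counts compare. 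The paper's per-agent observation is simpler to state, but your argument is the one that actually closes without extra bookkeeping, and it also makes transparent why the bound is tight (take $d_{i}=l_{i}$ and $s_{i}=1-l_{i}$).
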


\begin{proof}
Given that (1) $l^{a}(c_{i}) \in [0,1]$ and $l_{i} \in [0,1]$; 
(2) If $\min(l^{a}(c_{i}) - l_{i},0)<0$ then $\max(l^{a}(c_{i}) -l_{i},0) = 0$; and
(3) If $\max(l^{a}(c_{i})-l_{i},0) > 0$ then $\min(l^{a}(c_{i}) - l_{i},0)=0$. Thus, from (1--3) 
we have
$|\min(l^{a}(c_{i}) - l_{i},0)|$ + $\max(l^{a}(c_{i})-l_{i},0) \in [0,1]$.
Let $n=|\{a \in \delta(c_{i})|l^{a}(c_{i})-l_{i} > 0\}|$, then obviously it holds that
$\frac{n \cdot (|\min(l^{a}(c_{i}) - l_{i},0)| + \max(l^{a}(c_{i})-l_{i},0))}{n} \in [0,1]$ and as $|\delta(c_i)| \leq n$ then
$\frac{\sum_{a \in \delta(c_{i})}(|\min(l^{a}(c_{i}) - l_{i},0)| + \max(l^{a}(c_{i})-l_{i},0))}{n} \in [0,1]$  holds; and 
since $\sum_{i \in I_{\tau}} w_i = 1$ then \\
$\sum_{i \in I_{\tau}} w_i \cdot \frac{\sum_{a \in \delta(c_{i})}(|\min(l^{a}(c_{i}) - l_{i},0)| + \max(l^{a}(c_{i})-l_{i},0))}{n} \in [0,1]$;
Finally, distributing, this equation is equivalent to: \\
$\sum_{i \in I_{\tau}} w_i \frac{\sum_{a \in \delta(c_{i})}(|\min(l^{a}(c_{i}) - l_{i},0)|}{n} \\
+ \sum_{i \in I_{\tau}} w_i \frac{\sum_{a \in \delta(c_{i})}(\max(l^{a}(c_{i})-l_{i},0))}{n} \in [0,1]$ which in turn is equivalent to $ u(\eta) + o(\eta) \in [0,1]$.
\end{proof}
\vspace{-1.5mm}
Function $u_{\mathit{prof}}$ is used to measure how proficient a team is for a given task assignment. However, counting on the required competences to perform a task does not guarantee that the team will succeed at performing it. Therefore, in the next subsection we present an evaluation function to measure \emph{congeniality} within teams. Unlike our measure for proficiency, which is based on considering a particular task assignment, our congeniality measure will solely rely on the personalities and genders of the members of a team. 
\subsection{Evaluating team congeniality} \label{ssec:con} 
\label{ssec:congeniality}

Inspired by the experiments of Douglass J. Wilde \cite{Wilde2009} we will define the team utility function for congeniality $u_{con}(K)$, such that:
\begin{itemize}
\vspace{-1.5mm}
\item it values more teams whose SN and TF personality dimensions are as diverse as possible;
\vspace{-1.5mm}
\item it prefers teams with at least one agent with positive EI and TF dimensions and negative PJ dimension, namely an extrovert, thinking and judging agent (called ETJ personality),
\vspace{-1.5mm}
\item it values more teams with at least one introvert agent;
\vspace{-2.5mm}
\item it values gender balance in a team.
\end{itemize}
Therefore, the higher the value of function $u_{con}(K)$, the more diverse the team is. 
Formally, this team utility function is defined as follows:
\vspace{-1mm}
\begin{equation}
\label{eq:ucon}
\begin{aligned}
u_{con}(K) = & \sigma_{SN}(K) \cdot \sigma_{TF}(K) +  \max_{a_i \in K}{((0,\alpha, \alpha, \alpha) \cdot {\bf p_i}, 0)} \\ 
 & + {\max_{a_i \in K}{((0,0,-\beta,0) \cdot {\bf p_i}, 0)}} + \gamma \cdot \sin{(\pi \cdot g(K))}
\end{aligned}
\vspace{-2.5mm}
\end{equation}
where the different parameters are explained next. 
\begin{itemize}
\vspace{-1.5mm}
\item $\sigma_{SN}(K)$ and $\sigma_{TF}(K)$: These variances are computed over the SN and TF personality dimensions of the members of team $K$. Since we want to maximise $u_{con}$, we want these variances to be as large as possible. The larger the values of $\sigma_{SN}$ and $\sigma_{TF}$ the larger their product will be, and hence the larger team diversity too. 
\vspace{-4mm}
\item $\alpha$:  The maximum variance of any distribution over an interval $[a,b]$ corresponds to a distribution with the elements evenly situated at the extremes of the interval. The variance will always be $\sigma^2 \le ((b-a)/2)^2$. In our case with $b=1$ and $a=-1$ we have $\sigma \le 1$. Then, to make the four factors equally important and given that the maximum value for ${\bf p_i}$ (the personality profile vector of agent $a_i$) would be $(1, 1, 1, 1)$ a maximum value for $\alpha$ would be $3 \alpha = ((1-(-1))/2)^2 = 1$, as we have the factor $\sigma_{SN} \cdot \sigma_{TF}$, so $\alpha \le 0.33(3)$. For values situated in the middle of the interval the variance will be $\sigma^2 \le \frac{(b-a)^2}{12}$, hence a reasonable value for $\alpha$ would be $\alpha = \frac{\sqrt[]{(1-(-1))^2)/12}}{3} = 0.19$
\vspace{-1.5mm}
\item $\beta$: A similar reasoning shows that $\beta \le 1$.
\vspace{-1.5mm}
\item $\gamma$ is a parameter to weigh the importance of a gender balance and $g(K) = \frac{w(K)}{w(K) + m(K)}$. Notice that for a perfectly gender balanced team with $w(K) = m(K)$ we have that
$\sin{(\pi \cdot g(K))} = 1$. The higher the value of $\gamma$, the more important is that team $u_{con}$ is gender balanced. Similarly to reasoning about $\alpha$ and $\beta$, we assess $\gamma \leq 1$. In order to make this factor less important than the others in the equation we experimentally assessed that $\gamma = 0.1$ is a good compromise.
\end{itemize}
\vspace{-1.5mm}
In summary, we will use a utility function $u_{con}$ such that: $\alpha = \frac{\sigma_{SN}(K) \cdot \sigma_{TF}(SK)} 3$, $\beta = 3 \cdot \alpha $ and $\gamma = 0.1$. 

\subsection{Evaluating synergistic teams}

Depending on the task type, different importance for congeniality and proficiency should be given. For instance, creative tasks require a high level of communication and exchange of ideas, and hence, teams require a certain level of congeniality. While, repetitive tasks require good proficiency and less communication. The importance of proficiency ($\lambda$) and congeniality ($\mu$) is therefore a fundamental aspect of the task type. Now, given a team, we can combine its competence value  (in equation \ref{eq:uprof}) with its congeniality value (in equation \ref{eq:ucon}) to measure its \emph{synergistic value}. 
\vspace{-1.5mm}
\begin{mydef}
Given a team $K$, a task type $\tau = \linebreak \langle \lambda, \mu, {\{(c_{i},l_{i}, w_{i})\}_{i \in I_{\tau}}} \rangle$ and a task assignment $\eta: K \rightarrow 2^{C_{\tau}}$, the synergistic value of team $K$ is defined as:
\vspace{-1.5mm}
\begin{equation}
s(K,\eta) = \lambda \cdot u_{\mathit{prof}}(\eta) + \mu \cdot u_{con}(K)
\end{equation}
where $\lambda \in [0,1]$ is the grade to which the proficiency of team $K$ is important, and $\mu \in [-1,1]$ is the grade to which the task requires diverse personalities.
\end{mydef}

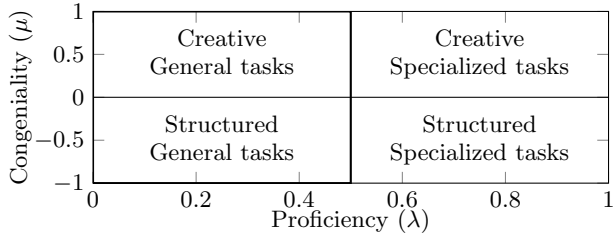
\begin{figure}
\caption{Values of congeniality and proficiency with respect to the task type.}
\begin{tikzpicture}
\begin{axis}[
    axis line style={->},
    x label style={at={(axis description cs:0.5,-0.1)},anchor=north},
    y label style={at={(axis description cs:-0.1,.5)},anchor=south},
  xlabel=Proficiency ($\lambda$),
  ylabel=Congeniality ($\mu$),
  xmin=0,
  xmax=1,
  ymin=-1,
  ymax=1,
  unit vector ratio=6 1,
]
    \node[black] at (axis cs:0.25,0.5) {
    \begin{tabular}{c}
    Creative \\ General tasks
  \end{tabular}};
        \node[black] at (axis cs:0.25,-0.5) {\begin{tabular}{c}
    Structured \\ General tasks
  \end{tabular}};
        \node[black] at (axis cs:0.75,0.5) {\begin{tabular}{c}
    Creative \\ Specialized tasks
  \end{tabular}};
        \node[black] at (axis cs:0.75,-0.5) {\begin{tabular}{c}
    Structured \\ Specialized tasks
  \end{tabular}};
        
    \draw [black, thick] (axis cs:0,-1) rectangle (axis cs:0.5,1);
	\draw (0,0) -- (1,0);
\end{axis}
\end{tikzpicture}
\label{tbl:parameters}
\vspace{-6mm}
\end{figure}

Figure \ref{tbl:parameters} shows the relation between the parameters $\lambda$ and $\mu$. 
In general, the higher the $\lambda$, the higher importance is given to the proficiency of a team. The higher the $\mu$ the more important is personality diversity. Notice, that the $\mu$ can be lower than zero. Having $\mu$ negative, we impose that the congeniality value will be as low as possible (to maximize $s(K,\eta)$) and so, team homogeneity is preferred. This situation may happen while performing tasks in unconventional performance environments that have serious consequences associated with failure. In order to quickly resolve issues, a team needs to be proficient and have team-mates who understand one another with minimum communication cost (which is associated to homogeneity of a team). 

\subsection{The synergistic team composition problem}
\label{ssec:synergisticProblem}

In what follows we consider that there are multiple instances of the same task to perform. Given a set of agents $A$, our goal is to split them into teams so that each team, and the whole partition of agents into teams, is balanced in terms of competences, personality and gender. 
We shall refer to these balanced teams as \emph{synergistic teams}, meaning that they are both congenial and proficient. 

Therefore, we can regard our team composition problem as a particular type of set partition problem. We will refer to any partition of $A$ as a team partition. However, we are interested in a particular type of team partitions, namely those where teams are constrained by size $m$ as follows.

\begin{mydef}
Given a set of agents $A$, we say that a team partition $P_m$ of $A$ is constrained by size $m$ iff: (i) for every team $K_i \in P_m$, $K_i \in \cal{K}_A$, $\max(m-1, 2) \leq |K| \leq m+1$ holds; and (ii) for every pair of teams $K_i, K_j \in P_m$ $||K_i| - |K_j|| \le 1$. 
\end{mydef}

As $|K| / m$ is not necessarily a natural number, we may need to allow for some flexibility in team size within a partition. This is why we introduced above the condition $\max(m-1, 2) \leq |K| \leq m+1$. In practical terms, in a partition we may have teams differing by one agent. We note by ${\cal P}_m(A)$ the set of all team partitions of $A$ constrained by size $m$. Henceforth, we will focus on team partitions constrained by some size. Since our goal is to find the most competence-balanced and psychologically-balanced team partition, we need a way to measure the synergistic value of a team partition, which we define as follows: 

\begin{mydef}
Given a task $t = \langle \tau, m \rangle$, a team partition $P_m$ and an assignment $\eta_i$ for each team $K_i \in P_m$, the synergistic value of $P_m$ is computed by:
\vspace{-1.5mm}
\begin{equation}
u(P_m,\bm{\eta}) = \prod_{i =1}^{|P_m|} s(K_i,\eta_i)
\end{equation}
\vspace{-1.5mm}
where $\bm{\eta}$ stands for the vector of task assignments $\eta_1,\ldots, \linebreak \eta_{|P_m|}$.
\end{mydef}

Notice that the use of a Bernoulli-Nash function over the synergistic values of teams will favour team partitions whose synergistic values are balanced.

Now we are ready to cast the synergistic team composition problem as the following optimisation problem:

\begin{mydef}
Given task $t = \langle \tau, m \rangle$ and set of agents $A$ the \textbf{synergistic team formation problem (STFP)} is the problem of finding a team partition constrained by size $m$, together with competence assignment for its teams, whose synergistic value is maximal. Formally, the STFP is the problem of finding the partition in $P \in \mathcal{P}_m(A)$ and the task assignments $\bm{\eta}$ for the teams in $P_m$ that maximises  $u(P_m,\bm{\eta})$.
\end{mydef}

\vspace{-2mm}
\section{Solving STFP}\label{sec:TeamForm}
In this section we detail an algorithm, the so-called \emph{\SynTeam}, which solves the synergistic team formation problem described above. We will start from describing how to split agents into a partition (see subsection \ref{ssec:dist}). Next, we will move on to the problem of assigning competences in a task to team members (see subsection \ref{ssec:asg}), so that the utility of synergistic function is maximal. Finally, we will explain \emph{\SynTeam} that is a greedy algorithm that quickly finds a first, local solution, to subsequently improve it, hoping to reach a global optimum.

\subsection{How do we split agents?} \label{ssec:dist}

We note by $n = |A|$ the number of agents in $A$, by $m \in \mathbb{N}$ the target number of agents in each team, and by $b$ the minimum total number of teams, $b = \left\lfloor  n/m\right\rfloor$. We define the quantity distribution of agents in teams of a partition, noted $T: \mathbb{N} \times \mathbb{N} \to \mathbb{N} \times \mathbb{N} \cup (\mathbb{N} \times \mathbb{N})^2 $ as:
\vspace{-2mm}
\begin{equation}
\begin{multlined}
T(n,m) = \\
\begin{cases}
\{(b, m)\}  & \text{if  } n \geq m \textit{ and } n \bmod m  = 0
\\
 \{(n \bmod m,m + 1), \\(b - (n \bmod m),m)\}
  & \text{if  } n \geq m \textit{ and } n \bmod m  \le b
\\
\{(b, m),(1, n \bmod m)\} & \text{if  } n \geq m \textit{ and } n \bmod m > b
\\
\{(0,m)\} & \text{otherwise}
\end{cases}
\end{multlined}
\end{equation}

Note that depending on the cardinality of $A$ and the desired team size, the number of agents in each team may vary by one individual (for instance if there are $n=7$  agents in $A$ and we want to compose duets ($m=2$), we split agents into two duets and one triplet).

\subsection{Solving an Assignment} \label{ssec:asg}


There are different methods to build an assignment. We have decided to solve our assignment problem by using the minimum cost flow model \cite{ahuja1993network}. This is one of the most fundamental problems within network flow theory and it can be efficiently solved. For instance, in \cite{orlin1993faster}, it was proven that the minimum cost flow problem can be solved in $O(m \cdot log(n) \cdot (m + n \cdot log(n)))$ time with $n$ nodes and $m$ arcs.

Our problem is as follows: 
There are a number of agents in team $K$ and a number of competence requests in task $t$. Any agent can be assigned to any competence, incurring some cost that varies depending on the agent competence level of the assigned competence. We want to get each competence assigned to at least one agent and each agent assigned to at least one competence in such a way that the total cost (that is both undercompetence and overcompetence) of the assignment is minimal with respect to all such assignments. 

Formally, let $G = (N, E)$ be a directed network defined by a set $N$ of $n$ nodes and a set $E$ of $e$ directed arcs. There are four types of nodes: (1) one source node; (2) $|K|$ nodes that represent agents in team $K$; (3) $|C_{\tau}|$ competence requests that form task type $\tau$; and (4) one sink node. Each $arc$ $(i, j) \in E$ has an associated cost $p_{ij} \in \mathbb{R}^+$ that denotes the cost per unit flow on that $arc$.  We also associate with each $arc$ $(i, j) \in E$ a capacity $u_{ij} \in \mathbb{R}^+$ that denotes the maximum amount that can flow on the arc. In particular, we have three kinds of edges: (1) Supply arcs. These edges connect the source to agent nodes. Each of these arcs has zero cost and a positive capacity $u_{ij}$ which define how many competences at most can be assigned to each agent. (2) Transportation arcs. These are used to ship supplies. Every transportation edge $(i, j) \in E$ is associated with a shipment cost $p_{ij}$ that is equal to:
\begin{equation*}
p_{ij} =
\begin{cases}
(l^{a_i}(c_{\mathit{j}}) - l_{\mathit{j}}) \cdot (1-\upsilon) \cdot w_{\mathit{j}} & \text{if  } l^{a_i}(c_{\mathit{j}} - l_{\mathit{j}}) > 0\\
-(l^{a_i}(c_{\mathit{j}}) - l_{\mathit{j}}) \cdot \upsilon \cdot w_{\mathit{j}} & \text{if  } l^{a_i}(c_{\mathit{j}} - l_{\mathit{j}}) < 0
\end{cases}
\label{costeq}
\end{equation*}
\noindent
where $v \in [0,1]$ is the penalty given to the undercompetence of team $K$(see subsection \ref{ssec:prof} for the definition). 
(3) Demand arcs. These arcs connect the competence requests nodes to the sink node. These arcs have zero costs and positive capacities $u_{ij}$ which equal the demand for each competence. 

Thus, a network is denoted by $(G, w, u, b)$. We associate with each node $i \in N$ an integer number $b(i)$ representing its supply. If $b(n) > 0$ then $n$ is a source node, if $b(n) < 0$ then $n$ is a sink node. In order to solve a task assignment problem, we use the implementation of \cite{goldberg1990finding} provided in the ort-tools.\footnote{\url{https://github.com/google/or-tools/blob/master/src/graph/min_cost_flow.h}} 
\vspace{-2mm}
\begin{figure}
\includegraphics[max size={\textwidth}{10.35cm}]{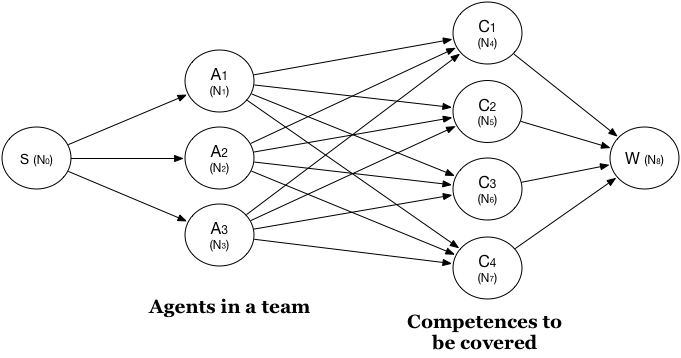}
\caption{An example of an assignment graph $G(N,E)$}\label{asg}
\vspace{-6mm}
\end{figure}

\paragraph{Example} Let us consider a team of three agents $K = \{a_1, a_2, a_3\}$:
\begin{itemize}
\vspace{-1.5mm}
\item $a_1 = \langle id_1, `woman', p_1, [l(c_1) = 0.9, l(c_2) = 0.5]\rangle$
\vspace{-1.5mm}
\item $a_2 = \langle id_2, `man', p_2, [l(c_2) = 0.2, l(c_3) = 0.8]\rangle$
\vspace{-1.5mm}
\item $a_3 = \langle id_3, `man', p_3, [l(c_2) = 0.4, l(c_4) = 0.6]\rangle$
\end{itemize}
and task type $\tau$ containing four competence requests \\ $\{(c_{1},0.8, 0.25), (c_{2}, 0.6, 0.25), (c_{3},0.6, 0.25),(c_{4},0.6, 0.25)\}$. \\ The penalty given to undercompetence is equal to $\upsilon=0.6$.

Our goal is to assign agents to competence requests, so that: (1) every agent is responsible for at least one competence, (2) every competence is covered by at least one agent, (3) the overall ``cost'' in minimal. 
As shown in figure \ref{asg}, we build a graph out of $n = 9$ nodes that is: one source node ($N_0$), three agents nodes ($N_1 - N_3$), four competences nodes ($N_4 - N_7$) and a sink node ($N_8$). Next, we add edges: (1) between source node $N_0$ and all agent nodes $N_1 - N_3$ that have a cost $p_{si} = 0$ and capacity $u_{si} = 2$ for all $i$ as the maximum number of competences assigned to one agent cannot be bigger than two if we want to make sure that all agents are assigned to at least one competence; (2) between agent nodes $N_1 - N_3$ and competence nodes ($N_4 - N_7$), where each capacity $u_{ij} = 1$ and we calculate costs according to the equation \ref{costeq}. For instance, the cost between $N_1$ and $N_4$ is equal to: $(0.9 - 0.8) \cdot (1-0.6) \cdot 0.25 = 0.01$. We multiply all costs by $1000$ to meet the requirements of the solver (edges need to be integer). Hence, the final cost $p_{14}=10$; (3) edges between competence nodes $N_4 - N_7$ and sink node $N_8$ that have costs $p_{jw} = 0$ and capacities $u_{jw} = 1$ to impose that each is assigned.
Once the graph is built, we pass it to the solver to get the assignment, and we get $c_1$ and $c_2$ assigned to $a_1$, $c_3$ assigned to $a_2$ and $c_4$ assigned to $a_3$.

\subsection{SynTeam algorithm} \label{ssec:SynTeam} 

Algorithm \ref{alg:teamDistribution} shows the SynTeam pseudocode.
Algorithm \ref{alg:teamDistribution} is divided into two parts:

{\bf 1. \textsl{Find a first team partition}}. This part of the algorithm simply builds a partition by randomly assigning agents to teams of particular team sizes. This part goes as follows. Given a list of agents $A$, we start by shuffling the list so that the order of agents in the list is random (line~1). Next, we determine the quantitative distribution of individuals among teams of size $m$ using function $T(|A|,m)$ as defined in section \ref{ssec:dist} (line~2). We start from the top of the shuffled list of agents (line~3). For each number of teams (line~4), we define a temporary set $team$ to store a current team (line~5). We add to $team$ subsequent $size$ agents from the shuffled list of agents (line~7). We add the newly created team to the team partition $P_{\mathit{best}}$ that we intend to build (line~10). When reaching line~14, $P_{\mathit{best}}$ will contain a first disjoint subset of teams (a team partition). 

{\bf 2. \textsl{Improve the current best team partition}}. The second part of the algorithm consists in improving the current best team partition. The idea is to obtain a better team partition by performing crossovers of two randomly selected teams to yield two better teams. In this part, we took inspiration from simulated annealing methods, where the algorithm might accept swaps that actually decrease the solution quality with a certain probability. The probability of accepting worse solutions slowly decreases as the algorithm explores the solution space (as the number of iterations increases). The annealing schedule is defined by the $\mathit{cooling\_rate}$ parameter. We have modified this method to store the partition with the highest synergistic evaluation found so far.
In detail, the second part works as follows. First, we select two random teams, $K_1$ and $K_2$, in the current team partition (line~15). Then we compute all team partitions of size $m$ with agents in $K_1 \cup K_2$ (line~19), and we select the best candidate team partition, named $P_{\mathit{bestCandidate}}$ (lines~19~to~26). If the best candidate synergistic utility is larger than the utility contribution of $K_1$ and $K_2$ to the current best partition $P_{\mathit{best}}$ (line~27), then we replace teams $K_1$ and $K_2$ by the teams in the best candidate team partition (line~28). If the best candidate team partition utility is lower
, then we check if the probability of accepting a worse solution is higher than a uniformly sampled value from $[0,1]$ (line~29). 
If so, 
we replace teams $K_1$ and $K_2$ by the teams in the best candidate team partition (line~30) and we lower $heat$ by a cooling rate. This part of the algorithm continues until the value of $heat$ reaches $1$ (line~13). We also store the best partition found so far (line~34) to make sure we do not end up with worse solution. Finally, we return found best partition $P_{\mathit{bestEver}}$ as well as the assignment $\eta$ for each team.
\begin{algorithm}[h]
\small
\caption{\quad \SynTeam}
\label{alg:teamDistribution}
\begin{algorithmic}[1]
    \Require $A$ \Comment{The list of agents}
	\Require $T(|A|,m)$ \Comment{Quantitative team distribution}
    \Require $P_{\mathit{best}} = \emptyset$ \Comment{Initialize best partition}
     \Require $\mathit{heat=10}$ \Comment{Initial temperature for second step}
     \Require $\mathit{Cooling\_rate}$ \Comment{Heating decrease}
     \Ensure $(P, \bm{\eta})$ \Comment{Best partition found and best assignments}
    \State $\mathit{random.shuffle(A)}$
    \If {$T(|A|,m) \ne (0,m)$}
    \State $\mathit{index} = 0$ \Comment{Used to iterate over the agent list}
    \ForAll{$(\mathit{numberOfTeams}, \mathit{size)} \in T(|A|,m)$}
      \State $team = \emptyset$
      \For {$i \in (0,\dots ,\mathit{(size-1))}$}
      \State $team = team \cup A[\mathit{index}]$
      \State $\mathit{index}=\mathit{index} + 1$
      \EndFor
      \State $P_{\mathit{best}} = P_{\mathit{best}} \cup \{team\}$
          \EndFor
              \State $\bm{ \eta_{\mathit{best}}} = \mathit{assign\_agents}(P_{\mathit{best}})$ \Comment{see Subsection \ref{ssec:asg}}
          \State $(P_{\mathit{bestEver}}, \mathit{bestValueEver}) = (P_{\mathit{best}},u(P_{\mathit{best}},\bm{ \eta_{\mathit{best}}}))$
    \While{$\mathit{heat} > 1$} 
    \State $(K_1,K_2) = selectRandomTeams(P_{\mathit{best}}$)
        \State $(\eta_1,\eta_2) = \mathit{assign\_agents}(\{K_1,K_2\})$
     \State $\mathit{contrValue} = u(\{K_1,K_2\},(\eta_1,\eta_2))$
    \State $(P_{\mathit{bestCandidate}}, \mathit{best Candidatevalue}) = (\emptyset,0)$
    \ForAll {$P_{\mathit{candidate}} \in P_m(K_1 \cup K_2) \setminus \{K_1,K_2\}$}
    \State $(\eta_1,\eta_2) = assign\_agents(P_{\mathit{candidate}})$ 
    \State $\mathit{candidateValue} = u(P_{\mathit{candidate}},(\eta_1,\eta_2))$
    \If{$\mathit{candidateValue} > \mathit{bestCandidateValue}$}
    \State $P_{\mathit{bestCandidate}} = P_{\mathit{candidate}}$
    \State $\mathit{bestCandidateValue} = \mathit{candidateValue}$
    \EndIf
    \EndFor
   \If{$\mathit{bestCandidateValue} > \mathit{contrValue}$}
   \State $P_{\mathit{best}} = replace(\{K_1,K_2\},P_{\mathit{bestCandidate}}, P_{\mathit{best}})$
   \ElsIf{$\mathbb{P}(\mathit{bestCandidateValue}, \mathit{contrValue}, heat)$ \StatexIndent[2] $\geq \mathit{random}(0, 1)$}
   \State $P_{\mathit{best}} = replace(\{K_1,K_2\},P_{\mathit{bestCandidate}},P_{\mathit{best}})$
    \EndIf
    \State $\bm{ \eta_{\mathit{best}}} = \mathit{assign\_agents}(P_{\mathit{best}})$
    \If {$\mathit{bestValueEver} < u(P_{\mathit{best}},\bm{ \eta_{\mathit{best}}})$}
    \State $P_{\mathit{bestEver}} = P_{\mathit{best}}$
   \EndIf
      	\State $heat$ = $heat-\mathit{Cooling\_rate}$
    \EndWhile
    \State $return(P_{\mathit{bestEver}},\mathit{assign\_agents(P_{\mathit{bestEver}}}))$
    \EndIf
\end{algorithmic}
\end{algorithm}
\vspace{-4mm}
\section{Experimental Results} \label{sec:results}

\subsection{Experimental Setting}
``Institut Torras i Bages'' is a state school near Barcelona. Collaborative work has been implemented there for the last 5 years in their final assignment (``Treball de S\'{\i}ntesi'') with a steady and significant increase in the scores and quality of the final product that students are asked to deliver. This assignment takes one week and is designed to check if students have achieved, and to what extent, the objectives set in the various curricular areas. It is a work that encourages teamwork, research, and tests relationships with the environment. Students work in teams and at the end of every activity present their work in front of a panel of teachers that assess the content, presentation and cooperation between team members. This is a creative task, although requiring high level of competences. 
\subsection{Data Collection} 
In current school practice, teachers group students according to their own, manual method based on the knowledge about students, their competences, background and social situation. This year we have used our grouping system based only on personality (\SynTeam\ with $\lambda = 0, \mu = 1$) upon two groups of students: `3r ESO A' (24 students), and `3r ESO C' (24 students). Using computers and/or mobile phones, students answered the questionnaire (described in section \ref{pers}) which allowed us to divide them into teams of size three for each class. Tutors have evaluated each team in each partition giving an integer value $v \in [1,10]$ meaning their expectation of the performance of each team. 
Each student team was asked to undertake the set of interdisciplinary activities (``Treball de S\'{\i}ntesi'') described above. We have collected each student's final mark for ``Treball de S\'{\i}ntesi'' as well as final marks obtained for all subjects. That is:  Catalan, Spanish, English, Nature, Physics and Chemistry, Social Science, Math, Physical Education, Plastic Arts, Technology. We have used a matrix provided by the tutors to relate each subject to different kinds of intelligence (that in education are understood as competences) needed for this subject. There are eight types of human intelligence \cite{gardner1987theory}, each representing different ways of processing information: Naturalist, Interpersonal, Logical/Mathematical, Visual/Spatial, Body/Kinaesthetic, Musical, Intrapersonal and Verbal/Linguistic. This matrix for each subject and each intelligence is shown in figure \ref{matrix}.

\begin{figure}[h]
\centering
$\begin{bmatrix}
  0 & 1 & 0 & 0 & 0 & 0 & 1 & 1 
\\0 & 1 & 0 & 1 & 0 & 1 & 1 & 1 
\\0 & 1 & 0 & 0 & 0 & 1 & 1 & 1  
\\1 & 1 & 0 & 1 & 1 & 0 & 1 & 1  
\\1 & 1 & 1 & 1 & 0 & 0 & 1 & 1  
\\1 & 1 & 0 & 0 & 0 & 0 & 1 & 1  
\\0 & 1 & 1 & 1 & 0 & 0 & 1 & 1  
\\0 & 1 & 0 & 1 & 1 & 0 & 1 & 1  
\\0 & 1 & 0 & 1 & 1 & 0 & 1 & 0  
\\1 & 1 & 1 & 0 & 1 & 0 & 1 & 1  
\end{bmatrix}$
\label{matrix}
\caption{Matrix matching Intelligence with subjects (each row corresponds to a subject, each column to an intelligence)}
\end{figure}

\noindent Subjects are represented by rows and intelligences by columns of the matrix in the order as provided above. Based on this matrix we calculate values of intelligences for every student by averaging all values obtained by her that are relevant for this intelligence. For instance, for Body/Kinaesthetic intelligence, we calculate an average of student marks obtained in Nature, Physical Education, Plastic Arts and Technology. An alternative way to measure students' competences level can be by calculating the collective assessments of each competence (like proposed by \cite{andrejczukCompetences}).

Finally, having competences (Intelligences), personality and actual performance of all students, we are able to calculate synergistic values for each team. We also calculate the average of marks obtained by every student in a team to get teams' performance values.

\subsection{Results}
\noindent 
Given several team composition methods, we are interested in comparing them to know which method better predicts team performance. Hence, we generate several team rankings using the evaluation values obtained through different methods. First, we generate a ranking based on actual team performance that will be our base to compare other rankings. Second, we generate a ranking based on the expert evaluations. Finally,  we generate several rankings based on calculated synergistic values with varying importance of congeniality and proficiency. Since ``Traball de S\'{\i}ntesi'' is a creative task, we want to examine the evaluation function with parameters $\mu > 0$ and $\lambda = 1-\mu$. In particular, we want to observe how the rankings change when increasing the importance of competences. 
Notice that teacher and actual performance rankings may include ties since the pool of possible marks is discrete (which is highly improbable in case of \SynTeam\ rankings). Therefore, before generating rankings based on synergistic values, we round them up to two digits to discretize the evaluation space. An ordering with ties is also known as a \emph{partial ranking}. 

Next, we compare teacher and \SynTeam\ rankings with the actual performance ranking using the standardized Kendall Tau distance. For implementation details, refer to the work by Fagin et al. \cite{Fagin:2004:CAR,fagin2006comparing}, which also provide sound mathematical principles to compare partial rankings. The results of the comparison are shown in Figure \ref{asg}. Notice that the lower the value of Kendall Tau, the more similar the rankings. We observe that the \SynTeam\ ranking improves as the importance of competences increases, and it is best at predicting students' performance for $\lambda = 0.8$ and $\mu = 0.2$ (Kendall Tau equal to $0.15$).  A standardised Kendall Tau distance for teacher ranking is equal to $0.28$, which shows that \SynTeam\ predicts the performance better than teachers, when competences are included ($\lambda > 0.2$). We also calculate the values of Kendall Tau for random ($0.42$) and reversed ($0.9$) rankings to benchmark teacher and \SynTeam\ grouping methods. The results show that both teachers and \SynTeam\ are better at predicting students' performance than the random method. 

\begin{figure}
\includegraphics[max size={\textwidth}{10.35cm}]{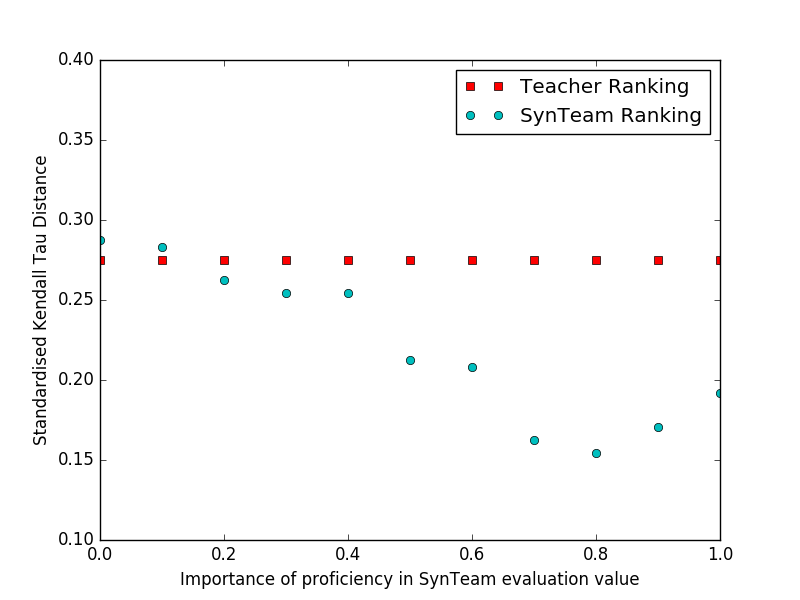}
\caption{Comparison of Kendall-Tau distances between different methods.}\vspace{-2mm}
\label{asg}
\vspace{-2mm}
\end{figure}

\section{Discussion} \label{sec:discuss}
In this paper we introduced \SynTeam, an algorithm for partitioning groups of humans into competent, gender and psychologically balanced teams. 

To our knowledge, \SynTeam\ is the first computational model to build synergistic teams that not only work well together, but  are also competent enough to perform an assignment requiring particular expertise. 

We have decided to evaluate our algorithm in the context of a classroom. Besides obvious advantages of observing students work in person, this scenario gave us an opportunity to compare our results with real-life, currently used practice. The results show that \SynTeam\ is able to predict team performance better that the experts that know the students, their social background, competences, and cognitive capabilities. 

The algorithm is potentially useful for any organisation that faces the need to optimise their problem solving teams (e.g. a classroom, a company, a research unit). The algorithm composes teams in a purely automatic way without consulting experts, which is a huge advantage for environments where there is a lack of experts.


Regarding future work, We would like to investigate how to determine quality guarantees of the algorithm. 

Additionally, there is a need to consider richer and more sophisticated models to capture the various factors that influence the team composition process in the real world. We will consider how our problem relates to the constrained coalition formation framework \cite{Rahwan}. This may help add constraints and preferences coming from experts that cannot be established by any algorithm, e.g. Anna cannot be in the same team with Jos\'e as they used to have a romantic relationship.

\newpage
\bibliographystyle{plain}

\end{document}